\documentclass[letterpaper, 10 pt, conference]{ieeeconf}  % Comment this line out
\IEEEoverridecommandlockouts                              % This command is only
\usepackage{url}
\usepackage{cite}
\usepackage{hyperref}
\usepackage{amsfonts}
\usepackage{amssymb}
\usepackage{mathtools}
\usepackage{acronym}
\usepackage{xcolor}
\usepackage{cases}
\usepackage[ruled,vlined]{algorithm2e}
\usepackage{booktabs}
\usepackage{multirow}
\newcommand{\rev}[1]{\textcolor{black}{#1}}

\usepackage{amsthm}

\acrodef{SVM}[SVM]{Support Vector Machine}
\acrodef{NEF}[NEF]{Neural Engineering Framework}
\acrodef{RC}[RC]{Reservoir Computing}
\acrodef{ESN}[ESN]{Echo State Network}
\acrodef{ELM}[ELM]{Extreme Learning Machine}
\acrodef{EDMD}[EDMD]{Extended Dynamic Mode Decomposition}
\acrodef{HAVOK}[HAVOK]{Hankel Alternative View of Koopman}
\acrodef{ESP}[ESP]{Echo State Property}
\acrodef{NRMSE}[NRMSE]{Normalized Root Mean Square Error}
\acrodef{RBF}[RBF]{Radial Basis Function}
\acrodef{HDMD}[HDMD]{Hankel Dynamic Mode Decomposition}

\newtheorem{assumption}{Assumption}
\newtheorem{theorem}{Theorem}[section]

\newtheorem{proposition}[theorem]{Proposition}
\theoremstyle{definition}

\theoremstyle{remark}
\newtheorem*{remark}{Remark}
\title{\LARGE \bf
Koopman Identification of Nonlinear Systems via Reservoir Liftings
}
\author{Weibin Gu$^{1}$, Chen Yang$^{1,2}$, Lu Shi$^{1}$ % <-this % stops a space
\thanks{$^{1}$The authors are with the Institute for AI Industry Research (AIR), Tsinghua University, Beijing, PR China
        {\tt\small \{guweibin, shilu\}@air.tsinghua.edu.cn}}
\thanks{$^{2}$Chen Yang is also with the School of Engineering, China University of Petroleum-Beijing at Karamay, Xinjiang Uygur Autonomous Region, PR China
        {\tt\small 2025216932@student.cup.edu.cn}}%
\thanks{This work was partly supported by the China Postdoctoral Science Foundation under Grant Number 2025M781650. }
}

\begin{document}

\maketitle
\thispagestyle{empty}
\pagestyle{empty}

%%%%%%%%%%%%%%%%%%%%%%%%%%%%%%%%%%%%%%%%%%%%%%%%%%%%%%%%%%%%%%%%%%%%%%%%%%%%%%%%
\begin{abstract}
Learning tractable linear representations of nonlinear dynamical systems via Koopman operator theory is often hindered by \rev{dictionary} selection, temporal memory encoding, and numerical ill-conditioning. Inspired by Reservoir Computing (RC) paradigm, this paper introduces the RC–Koopman framework, which interprets reservoir as a stateful, finite-dimensional Koopman dictionary whose temporal depth is explicitly controlled by its spectral radius. We show that the Echo State Property (ESP) guarantees well-posedness and favorable numerical conditioning of the lifted Koopman approximation. A correlation-based spectral radius selection algorithm aligns reservoir memory with dominant system timescales. Analysis reveals how the finite memory of the reservoir determines which Koopman eigenfunctions remain observable from the lifted features. \rev{Evaluation on synthetic benchmarks demonstrates that RC--Koopman achieves a favorable balance between reconstruction accuracy of the underlying nonlinear dynamics and dynamical stability, compared to} Extended Dynamic Mode Decomposition (EDMD) and Hankel-based lifting approaches. Code available at: \url{https://github.com/NEAR-the-future/RC-Koopman.git}
\end{abstract}

%%%%%%%%%%%%%%%%%%%%%%%%%%%%%%%%%%%%%%%%%%%%%%%%%%%%%%%%%%%%%%%%%%%%%%%%%%%%%%%%
\section{Introduction}
Learning tractable models of nonlinear dynamical systems remains a fundamental challenge in system identification and control. While nonlinear models can capture complex behaviors with high fidelity, they are often difficult to integrate with classical analysis, control synthesis, and optimization frameworks. A widely adopted strategy to address this difficulty is to \emph{lift} system observations into a higher-dimensional feature space where the dynamics admit an approximately linear representation. Typically, Koopman operator theory provides a principled foundation for this idea by showing that nonlinear state evolution can be represented as linear dynamics in an infinite-dimensional space of observables~\cite{koopman1931hamiltonian,brunton2022modern,shi2026koopman}. In practice, finite-dimensional approximations of this operator are typically obtained through data-driven identification. Prominent examples include \ac{EDMD}~\cite{williams2015data}, which employs predefined observable dictionaries, and delay-coordinate approaches such as \ac{HAVOK}~\cite{brunton2017chaos} and related Hankel-based methods~\cite{kamb2020time,hirsh2021structured}. Despite their success, these approaches face two fundamental challenges. First, dictionary-based methods rely on hand-crafted, memoryless observables, making the discovery of a Koopman-invariant subspace highly heuristic. Second, delay-coordinate embeddings incorporate temporal information but often suffer from dimensionality growth and severe numerical ill-conditioning due to the strong correlations present in Hankel matrices.

\ac{RC} architectures, such as \acp{ESN}, offer an alternative yet appealing paradigm for constructing expressive lifting representations by projecting inputs through a fixed, randomly initialized recurrent network~\cite{lukovsevivcius2009reservoir}. 
A driven reservoir establishes generalized synchronization with the input system, where the synchronization map typically forms a topological embedding; this ensures the reservoir state space preserves the geometric structure of the input attractor~\cite{hart2025generic}. Such representation allows approximation from the current reservoir state via a learned, usually linear map, which in Koopman theory corresponds to identifying the Koopman operator on an observable space.
\rev{Notably, we distinguish between \textit{stateless} liftings such as memoryless basis functions in \ac{EDMD} and \textit{stateful} liftings. 
As a stateful system, the reservoir possesses internal dynamics that enable temporal memory encoding, rendering the current lifted observable a functional of the entire input history. 
The stability of this representation is ensured by contractive dynamics\rev{~\cite{kaplan2026dominant}} under the \ac{ESP}, which guarantees the reservoir state remains a stable functional of past inputs~\cite{jaeger2001echo}. 
The temporal depth of this memory is explicitly modulated by the reservoir's spectral radius. }
Together, these properties establish reservoir dynamics as a natural stateful lifting mechanism for Koopman identification, capturing essential temporal dependencies without the need for explicit delay embeddings.

\begin{figure}
    \centering
    \includegraphics[width=\linewidth]{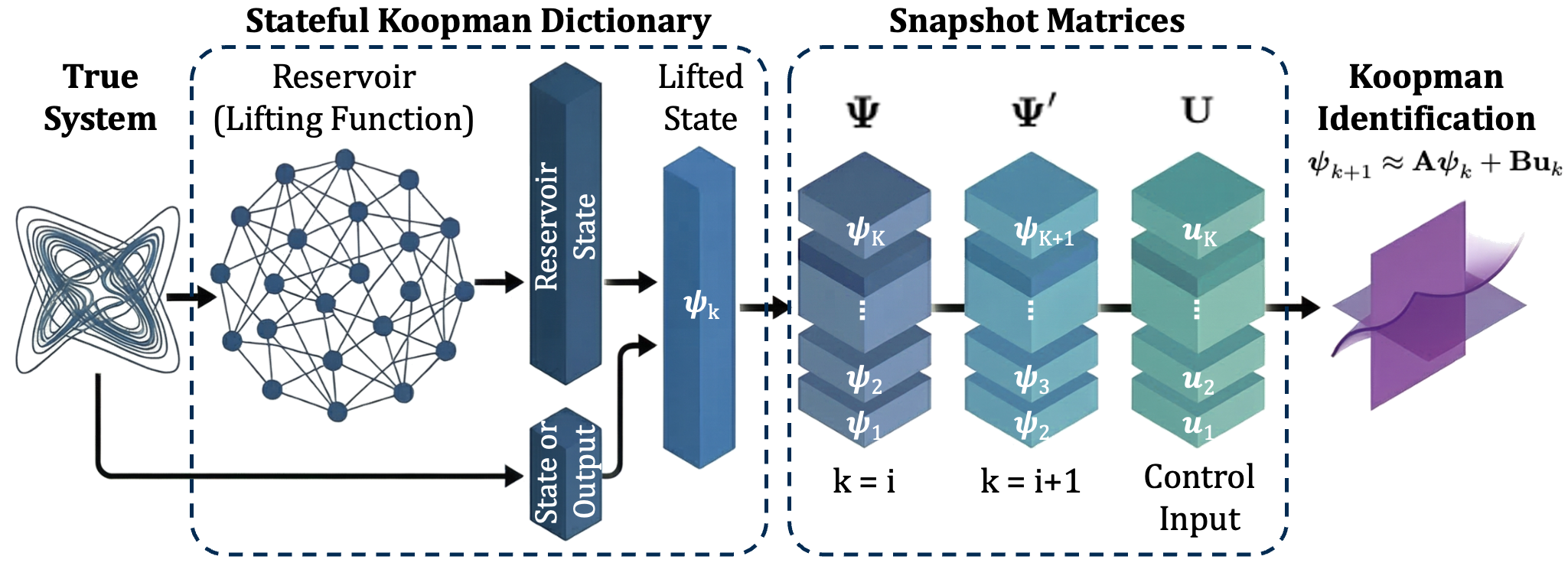}
    % \caption{Proposed RC--Koopman framework.}
    \caption{\rev{\textbf{Schematic of the proposed RC--Koopman framework.} System measurements are mapped to a high-dimensional reservoir that serves as a stateful dictionary, where the resulting reservoir state is augmented by the system state or output to form the lifted state $\boldsymbol{\psi}_k$. The Koopman matrices $\mathbf{A}$ and $\mathbf{B}$ are then identified via least-squares regression using time-shifted snapshot matrices ($\mathbf{\Psi}, \mathbf{\Psi}'$) and control inputs ($\mathbf{U}$).}}
    \label{fig:RC-Koopman-Paper}
\end{figure}

Several recent studies have explored connections between the \ac{RC} paradigm and Koopman-based modeling~\cite{bollt2021explaining,gauthier2021next,gulina2021two,yilmaz2024model}. For example, it was shown in~\cite{bollt2021explaining} that linear reservoirs with identity activation functions are functionally equivalent to \ac{HDMD}, linking reservoir dynamics to delay-coordinate approximations of the Koopman operator. Extensions incorporating nonlinear transformations of delay embeddings were later proposed in~\cite{gauthier2021next}. Other works combine reservoirs with \ac{EDMD}-type constructions~\cite{gulina2021two} or integrate reservoir features into Hankel-based frameworks~\cite{yilmaz2024model}. While these studies demonstrate empirical compatibility between reservoir dynamics and Koopman lifting, the role of reservoir properties such as contraction and spectral structure in the resulting Koopman identification remains insufficiently understood. %Moreover, most validations are restricted to noise-free synthetic chaotic systems, leaving open questions regarding robustness under measurement noise and experimental data.

In this work, we propose a Koopman identification framework (Fig.~\ref{fig:RC-Koopman-Paper}) that interprets the reservoir as a \emph{stateful Koopman dictionary}. The recurrent dynamics lift system histories into a high-dimensional fading-memory space, enabling expressive representations while maintaining favorable numerical conditioning. The contraction properties induced by the \ac{ESP} naturally compress temporal information into a stable latent representation, mitigating the dimensionality explosion and ill-conditioning commonly encountered in delay-coordinate liftings. Our main contributions are as follows:
\begin{itemize}
    \item We formalize a unified \ac{RC}--Koopman framework and establish how \ac{ESP} ensures well-posedness and favorable numerical conditioning of the lifted approximation.
    \item We characterize how finite reservoir memory determines Koopman eigenfunction observability and provide a correlation-based algorithm for spectral radius selection.
\end{itemize}

The remainder of this paper is organized as follows. Section~\ref{sec:rc_koopman_framework} introduces the proposed \ac{RC}--Koopman identification framework. Section~\ref{sec:theoretical_analysis} presents the theoretical analysis of the lifting properties. Section~\ref{sec:results} evaluates the approach on synthetic benchmarks. Finally, Section~\ref{sec:conclusion} concludes the paper and outlines future directions.

\section{The RC--Koopman Framework}
\label{sec:rc_koopman_framework}

\subsection{Problem Formulation}

Consider the discrete-time nonlinear system
\begin{subequations}\label{eq:nonlinear_system}
    \begin{align}
    \mathbf{x}_{k+1} &= \mathbf{f}(\mathbf{x}_k,\mathbf{u}_k), \label{eq:nonlinear_state} \\
    \mathbf{y}_k &= \mathbf{h}(\mathbf{x}_k),
    \label{eq:nonlinear_output}
\end{align}
\end{subequations}
where $\mathbf{x}_k \in \mathbb{R}^{n_x}$ denotes the system state, $\mathbf{u}_k \in \mathbb{R}^{n_u}$ the control input, and $\mathbf{y}_k \in \mathbb{R}^{n_y}$ the measured output. The nonlinear state transition map $\mathbf{f}$ and the measurement function $\mathbf{h}$ are assumed unknown. Our objective is to construct a finite-dimensional linear representation of the dynamics from data that admits an operator-theoretic interpretation and is suitable for prediction and control.

% Koopman operator theory provides a principled framework for this purpose by representing nonlinear dynamics as linear evolution of observable functions.
% in an infinite-dimensional Hilbert space $\mathcal{H}$. In practice, finite-dimensional approximations are obtained by identifying a subspace $\bar{\mathcal{H}}\subset\mathcal{H}$ that is approximately invariant under the Koopman operator. 
For systems with control inputs, we adopt the commonly used approximation employed in Koopman-with-control formulations such as EDMDc~\cite{proctor2016dynamic}, in which the lifted dynamics are modeled as affine in the control input,
\begin{equation}
\boldsymbol{\psi}_{k+1}
\approx
\mathbf{A}\boldsymbol{\psi}_k
+
\mathbf{B}\mathbf{u}_k ,
\end{equation}
where $\boldsymbol{\psi}_k$ denotes the lifted state and $\mathbf{A},\mathbf{B}$ are finite-dimensional operators to be identified from data (Fig.~\ref{fig:RC-Koopman-Paper}). The central design choice in this framework is therefore the
construction of the lifting functions $\boldsymbol{\psi}_k$.

\subsection{Stateful Lifting via Reservoir Dynamics}

Rather than employing stateless dictionaries as in standard \ac{EDMD}, we construct a stateful lifting using a reservoir system driven by measurements of~\eqref{eq:nonlinear_system}. The reservoir state $\mathbf{r}_k\in\mathbb{R}^{n_r}$ evolves according to
\begin{equation}
    \mathbf{r}_k
    =
    \sigma\!\left(
        \mathbf{W}_{\mathrm{res}}\mathbf{r}_{k-1}
        +
        \mathbf{W}_{\mathrm{in}}\mathbf{v}_k
    \right),
    \label{eq:reservoir_dynamics}
\end{equation}
where $\sigma$ is a nonlinear activation function, $\mathbf{v}_k \in \mathbb{R}^{n_v}$ is the reservoir input, and $\mathbf{W}_{\mathrm{res}}\in\mathbb{R}^{n_r\times n_r}$ and $\mathbf{W}_{\mathrm{in}}\in\mathbb{R}^{n_r\times n_v}$ are fixed reservoir and input weight matrices, respectively. For simplicity, bias terms are omitted, although they can be incorporated without affecting the analysis. Throughout this work, we set $\mathbf{v}_k=\mathbf{y}_k$\rev{, therefore implying $n_v = n_y$.}

As illustrated in Fig.~\ref{fig:RC-Koopman-Paper}, the lifted state is defined as
\begin{equation}
    \boldsymbol{\psi}_k
    =
    [\mathbf{v}_k^\top \;
        \mathbf{r}_k^\top]^\top
    \in\mathbb{R}^{n_\psi},
    \qquad
    n_\psi = n_v + n_r ,
    \label{eq:lifted_state}
\end{equation}
\rev{where we typically choose $n_r \gg n_v$, ensuring that the nonlinear system dynamics are projected into a sufficiently rich feature space.}
This defines a nonlinear observable mapping from the current measurement and reservoir state to a finite-dimensional feature vector. Although the observable depends only on the current measurement $\mathbf{v}_k$, the reservoir state $\mathbf{r}_k$ implicitly summarizes past measurements through the recurrent dynamics in~\eqref{eq:reservoir_dynamics}. Consequently, temporal dependencies are encoded implicitly through the reservoir recurrence rather than explicitly through stacked delays, as in Hankel-based liftings. Moreover, when the system state is not directly measurable, the reservoir effectively constructs a history-dependent embedding of the measurement sequence, analogous in spirit to delay-coordinate embeddings used in Hankel-based Koopman methods.

\subsection{Echo State Property}

A fundamental requirement for the above lifting to be well-posed is that the reservoir state be uniquely determined by the input history. This is guaranteed by the \ac{ESP} from the \ac{RC} paradigm, which ensures that, for any bounded input sequence, the reservoir state is uniquely determined by the input history and becomes independent of the initial reservoir condition. Under the following assumption, the reservoir update defines a contractive, input-driven dynamical system.
\begin{assumption}[Sufficient Condition for ESP]
\label{assump:esp}
The activation function $\sigma$ in~\eqref{eq:reservoir_dynamics} is globally
Lipschitz with constant $L_\sigma$, and
\begin{equation}
    L_\sigma \|\mathbf{W}_{\mathrm{res}}\|_2 < 1,
    \label{eq:esp_condition}
\end{equation}
where $\|\cdot\|_2$ denotes the spectral norm.
\end{assumption}
% Under Assumption~\ref{assump:esp}, the reservoir update is a contraction mapping with respect to the state variable, ensuring incremental global asymptotic stability and uniqueness of the reservoir trajectory for any bounded input sequence.
\begin{assumption}[Practical ESP Scaling Rule]
\label{assump:practical_esp}
The reservoir weight matrix in~\eqref{eq:reservoir_dynamics} is scaled such that
\begin{equation}
\label{eq:practical_esp}
\rho(\mathbf W_{\mathrm{res}}) < 1, 
\end{equation}
where $\rho(\cdot)$ denotes the spectral radius.
This condition is widely used in practice with \texttt{tanh} activation
function.
\end{assumption}
\begin{remark}
    Condition~\eqref{eq:practical_esp} is weaker than the sufficient condition~\eqref{eq:esp_condition}, as $\rho(\mathbf{W}) \le \|\mathbf{W}\|_2$. \rev{While~\eqref{eq:esp_condition} provides a rigorous contraction guarantee, it is often overly conservative, limiting the reservoir’s memory and expressivity. In practice, the spectral radius condition~\eqref{eq:practical_esp} is the standard \ac{RC} design rule; it facilitates richer dynamics where the activation function's nonlinearity typically ensures empirical stability.} Accordingly, we distinguish between the theoretical well-posedness of Assumption~\ref{assump:esp} and the practical scaling of Assumption~\ref{assump:practical_esp}, used here to balance lifting richness with stability.
\end{remark}

% \begin{remark}
% The condition~\eqref{eq:practical_esp} is weaker than the sufficient condition~\eqref{eq:esp_condition}, since for any matrix $\mathbf{W}$ we have $\rho(\mathbf{W}) \le \|\mathbf{W}\|_2$. 
% \rev{While~\eqref{eq:esp_condition} offers a rigorous contraction-based guarantee, it is often overly conservative, unnecessarily restricting the reservoir's memory capacity and dynamical expressivity. In contrast, the spectral radius condition~\eqref{eq:practical_esp} is the established design rule in \ac{RC} paradigm. It allows for a broader range of rich, non-conservative dynamics where the nonlinearity of the activation function often induces the necessary contraction in practice.}
% Throughout this work, we distinguish between the sufficient theoretical condition (Assumption~\ref{assump:esp}), which ensures well-posedness, and the practical scaling rule (Assumption~\ref{assump:practical_esp}), which we use to balance lifting richness with empirical stability.
% \end{remark}

\subsection{Koopman Operator Identification}

\rev{As defined in~\eqref{eq:reservoir_dynamics} and~\eqref{eq:lifted_state}, the lifted state $\boldsymbol{\psi}_k$ comprises the system measurement $\mathbf{y}_k$ and the stateful reservoir features $\mathbf{r}_k$.} 
Using the data sequence $\{(\boldsymbol{\psi}_k,\boldsymbol{\psi}_{k+1},\mathbf{u}_k)\}_{k=1}^K$ over $K$ time steps, we form the snapshot matrices (Fig.~\ref{fig:RC-Koopman-Paper})
\begin{subequations}\label{eq:identification_data_pairs}
    \begin{align}
    \boldsymbol{\Psi} &= [\boldsymbol{\psi}_1,\dots,\boldsymbol{\psi}_K]
    \in \mathbb{R}^{n_\psi \times K}, \label{eq:lifted_data_matrices_current}\\
    \boldsymbol{\Psi}' &= [\boldsymbol{\psi}_2,\dots,\boldsymbol{\psi}_{K+1}]
    \in \mathbb{R}^{n_\psi \times K}, \label{eq:lifted_data_matrices_next}\\
    \mathbf{U} &= [\mathbf{u}_1,\dots,\mathbf{u}_K]
    \in \mathbb{R}^{n_u \times K}. \label{eq:control_data_matrices}
\end{align}
\end{subequations}
Note that the control input is not lifted and therefore enters the identified model affinely. A finite-dimensional Koopman approximation is obtained by solving
\begin{equation}
\label{eq:Koopman_control_affine_form}
    \boldsymbol{\Psi}'
    \approx
    \mathbf{A}\boldsymbol{\Psi} + \mathbf{B}\mathbf{U},
    \qquad
    \mathbf{A} \in \mathbb{R}^{n_\psi \times n_\psi}, \;
    \mathbf{B} \in \mathbb{R}^{n_\psi \times n_u},
\end{equation}
in the least-squares sense, yielding
\begin{equation}
    [\mathbf{A}\;\mathbf{B}]
    =
    \boldsymbol{\Psi}'
    \begin{bmatrix}
        \boldsymbol{\Psi} \\
        \mathbf{U}
    \end{bmatrix}^{\!\dagger},
    \label{eq:koopman_regression}
\end{equation}
where $(\cdot)^\dagger$ denotes the Moore--Penrose pseudoinverse.
In practice, regularization (e.g., ridge regression) may be incorporated to
improve numerical robustness. % (see Appendix~\ref{subsec:practical}).
Since $\mathbf{v}_k=\mathbf{y}_k$ is explicitly included in the lifted state, the system output can be recovered through a simple projection
\begin{equation}
    \hat{\mathbf{y}}_k
    =
    \mathbf{C}\boldsymbol{\psi}_k,
    \qquad
    \mathbf{C}
    =
    \begin{bmatrix}
        \mathbf{I}_{n_v} & \mathbf{0}_{n_r}
    \end{bmatrix},
\end{equation}
yielding a one-step prediction with measurement updates. \rev{\begin{remark}
    Unlike many \ac{RC} applications focused on multi-step forecasting, our approach utilizes the reservoir as a stateful dictionary for operator-theoretic identification, adhering to the standard Koopman formulation.
\end{remark}}
% , summarized in Algorithm~\ref{alg:single_step_prediction}. 
% \rev{It is important to note that \eqref{eq:Koopman_control_affine_form} focuses on identifying the one-step-ahead linear transition in the lifted space. This is consistent with the standard Koopman operator formulation, which seeks a linear propagator for a single time increment. The primary value of employing a reservoir here is not to alter the prediction horizon, but to provide a stateful, systematically generated dictionary that avoids the heuristic nature of stateless observables in \ac{EDMD} while maintaining superior numerical conditioning over delay-coordinate embeddings.}

%%%%%% TODO: REMOVE DUE TO SPACE LIMIT
% \begin{algorithm}
% \caption{Single-Step RC--Koopman Prediction}
% \label{alg:single_step_prediction}
% \small
% \KwIn{
% Identified operators $\mathbf{A}, \mathbf{B}$;
% reservoir weights $\mathbf{W}_{\mathrm{res}}, \mathbf{W}_{\mathrm{in}}$;
% current reservoir state $\mathbf{r}_{k-1}$;
% measurement $\mathbf{y}_k$;
% input $\mathbf{u}_k$
% }
% \KwOut{Predicted output $\hat{\mathbf{y}}_{k+1}$}

% Set reservoir input $\mathbf{v}_k \gets \mathbf{y}_k$\;
% Update reservoir state using \eqref{eq:reservoir_dynamics}\;
% Construct lifted state $\boldsymbol{\psi}_k$ using \eqref{eq:lifted_state}\;
% Predict lifted state using \eqref{eq:Koopman_control_affine_form}\;
% Recover output $\hat{\mathbf{y}}_{k+1} = \mathbf{C}\boldsymbol{\psi}_{k+1}$\;

% \Return $\hat{\mathbf{y}}_{k+1}$
% \end{algorithm}

%%%%%%%%%%%%%%%%%%%%%%%%%%%%%%%%%%%%%%%%%%%%%%%%%%%%%%%%%%%%%%%%%%%%%%%%%%%%%%%%
\section{Theoretical Analysis of Reservoir Liftings}
\label{sec:theoretical_analysis}

Consider the reservoir system~\eqref{eq:reservoir_dynamics} and the lifted state defined as~\eqref{eq:lifted_state}, we clarify hereafter the theoretical role of reservoir dynamics in Koopman-based identification. 
% Rather than introducing reservoir features merely as an empirical lifting, we analyze how structural properties of \ac{RC} paradigm characterize well-posedness, conditioning of lifted data, temporal memory, and Koopman spectrum.

\subsection{Well-Posedness of Reservoir Liftings}

We first establish that the reservoir state is uniquely determined by the input sequence under the \ac{ESP}. Moreover, the lifted coordinates $\boldsymbol\psi_k$ are guaranteed to evolve on a bounded invariant set determined solely by the input sequence. Such well-posedness is not intrinsic to data-driven liftings such as \ac{EDMD} or delay embeddings. Let $\ell_\infty(\mathbb Z,\mathbb R^{n_r})$ denote the Banach space of bounded sequences with norm
\begin{equation}
    \|\mathbf r\|_{\ell_\infty} = \sup_{k\in\mathbb Z} \|\mathbf r_k\|_2 .
\end{equation}

\begin{theorem}[Well-Posed Reservoir Lifting]
\label{thm:esp_lifting}
Assume the activation function $\sigma$ is Lipschitz continuous with
constant $L_\sigma$ and sufficient condition for ESP~\eqref{eq:esp_condition} is satisfied. Then for any bounded input sequence $\{\mathbf v_k\}\in\ell_\infty(\mathbb Z,\mathbb R^{n_v})$,
the reservoir system \eqref{eq:reservoir_dynamics} admits a unique bounded state trajectory $\{\mathbf r_k\}\in\ell_\infty(\mathbb Z,\mathbb R^{n_r})$ independent of the initial condition. Moreover, the induced mapping from input histories to reservoir states is causal and has the fading-memory property.
\end{theorem}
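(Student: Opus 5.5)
The plan is to treat the bi-infinite reservoir trajectory as a fixed point of an operator on $\ell_\infty(\mathbb Z,\mathbb R^{n_r})$ and apply the Banach contraction principle. Fix a bounded input $\mathbf v\in\ell_\infty(\mathbb Z,\mathbb R^{n_v})$ and define
\begin{equation*}
(\mathcal T\mathbf r)_k = \sigma\!\left(\mathbf W_{\mathrm{res}}\mathbf r_{k-1}+\mathbf W_{\mathrm{in}}\mathbf v_k\right),\qquad k\in\mathbb Z .
\end{equation*}

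\textbf{Step 1: $\mathcal T$ maps $\ell_\infty$ into itself.} Since the bias is omitted, normalize so that $\sigma(0)=0$; otherwise carry the constant $\|\sigma(0)\|_2$, which is finite. Lipschitz continuity then gives
\begin{equation*}
\|(\mathcal T\mathbf r)_k\|_2\le L_\sigma\|\mathbf W_{\mathrm{res}}\|_2\|\mathbf r\|_{\ell_\infty}+L_\sigma\|\mathbf W_{\mathrm{in}}\|_2\|\mathbf v\|_{\ell_\infty}+\|\sigma(0)\|_2 .
\end{equation*}

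\textbf{Step 2: $\mathcal T$ is a contraction.} For two sequences $\mathbf r,\mathbf s$, taking the supremum over $k$ gives
\begin{equation*}
\|\mathcal T\mathbf r-\mathcal T\mathbf s\|_{\ell_\infty}\le \lambda\|\mathbf r-\mathbf s\|_{\ell_\infty},\qquad \lambda:=L_\sigma\|\mathbf W_{\mathrm{res}}\|_2<1,
\end{equation*}
using \eqref{eq:esp_condition}.

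\textbf{Step 3: existence, uniqueness and the bound.} Banach's theorem yields a unique fixed point $\mathbf r^\star$, i.e.\ a unique bounded trajectory of \eqref{eq:reservoir_dynamics}. Applying Step~1 at the fixed point gives the explicit bound
\begin{equation*}
\|\mathbf r^\star\|_{\ell_\infty}\le \frac{L_\sigma\|\mathbf W_{\mathrm{in}}\|_2\|\mathbf v\|_{\ell_\infty}+\|\sigma(0)\|_2}{1-\lambda}.
\end{equation*}

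\textbf{Step 4: independence of the initial condition.} This is the forward-time statement. Take any trajectory started at time $k_0$ from an arbitrary $\mathbf r_{k_0}$ and compare it with $\mathbf r^\star$. The same one-step estimate gives
\begin{equation*}
\|\mathbf r_k-\mathbf r^\star_k\|_2\le\lambda^{k-k_0}\|\mathbf r_{k_0}-\mathbf r^\star_{k_0}\|_2\to0 .
\end{equation*}
So every trajectory converges exponentially to the unique bounded one, and the initial condition is forgotten.

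\textbf{Step 5: causality.} I would use a second contraction argument on the half-line. Let $\mathbf v,\tilde{\mathbf v}$ agree for all $j\le k$. The restriction of $\mathcal T$ to indices $\le k$ depends only on $\{\mathbf v_j\}_{j\le k}$, and it is itself a contraction on $\ell_\infty((-\infty,k])$. Its unique fixed point must coincide with the restriction of $\mathbf r^\star$, so $\mathbf r^\star_k$ is a function of past inputs only.

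\textbf{Step 6: fading memory.} I expect this to be the main obstacle, because it requires fixing a definition. I would use the standard one: for inputs in a bounded set, $\mathbf r^\star_k$ depends continuously on the past in the weighted norm $\sup_{j\ge0} w_j\|\mathbf v_{k-j}\|_2$ for some decreasing weight $w_j\to0$. Unrolling the recursion gives
\begin{equation*}
\|\mathbf r_k-\tilde{\mathbf r}_k\|_2\le L_\sigma\|\mathbf W_{\mathrm{in}}\|_2\sum_{j\ge0}\lambda^j\|\mathbf v_{k-j}-\tilde{\mathbf v}_{k-j}\|_2 .
\end{equation*}
Split the sum at a horizon $J$. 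The terms with $j>J$ are bounded by a constant times $\lambda^{J}$, using boundedness of both inputs. The terms with $j\le J$ are controlled by $\sup_{j\le J}\|\mathbf v_{k-j}-\tilde{\mathbf v}_{k-j}\|_2$. Choosing $w_j=\mu^j$ with $\lambda<\mu<1$ then makes the whole sum bounded by a constant times the weighted sup-distance, since $\sum_j(\lambda/\mu)^j<\infty$. This gives Lipschitz, hence uniform, continuity of the input-to-state map in the fading-memory norm.
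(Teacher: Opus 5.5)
Your proposal is correct and follows the same route as the paper: a Banach contraction argument for $\mathcal T$ on $\ell_\infty(\mathbb Z,\mathbb R^{n_r})$ with contraction constant $L_\sigma\|\mathbf W_{\mathrm{res}}\|_2<1$. Your Steps 1 and 4--6 go further than the paper, which only asserts these points or defers them to the cited ESN reference. They supply the self-map check needed for Banach's theorem, the explicit forward-time forgetting estimate, the half-line causality argument, and an actual weighted-norm proof of fading memory.
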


\begin{proof}
Define the operator $\mathcal T$ acting on sequence space
$\ell_\infty(\mathbb Z,\mathbb R^{n_r})$ as 
$(\mathcal T \mathbf r)_k
=
\sigma\!\left(
\mathbf W_{\mathrm{res}} \mathbf r_{k-1}
+
\mathbf W_{\mathrm{in}} \mathbf v_k
\right).$
For any two sequences $\mathbf r,\tilde{\mathbf r}$, 
$\|(\mathcal T\mathbf r)_k-(\mathcal T\tilde{\mathbf r})_k\|_2
\le
L_\sigma
\|\mathbf W_{\mathrm{res}}\|_2
\|\mathbf r_{k-1}-\tilde{\mathbf r}_{k-1}\|_2 .$
Taking the supremum over $k$ gives
$\|\mathcal T\mathbf r-\mathcal T\tilde{\mathbf r}\|_{\ell_\infty}
\le
L_\sigma \|\mathbf W_{\mathrm{res}}\|_2
\|\mathbf r-\tilde{\mathbf r}\|_{\ell_\infty}.$
Under condition \eqref{eq:esp_condition}, $\mathcal T$ is a contraction mapping. Existence and uniqueness of a fixed point therefore follow from the Banach fixed-point theorem. The resulting state sequence depends causally on the input history and exhibits exponential forgetting of initial conditions, which establishes the fading-memory property \cite{jaeger2001echo}.
\end{proof}

%%%%%%%%%%%%%%%%%%%%%%%%%%%%%%%%%%%%%%%%%%%%%%%%%%%%
\subsection{Conditioning of Reservoir Lifted Data}

Let the lifted data matrix be defined by~\eqref{eq:lifted_data_matrices_current} and define the empirical Gramian
\begin{equation}
\mathbf G
=
\boldsymbol\Psi\boldsymbol\Psi^\top
=
\sum_{k=1}^K
\boldsymbol\psi_k\boldsymbol\psi_k^\top .
\label{eq:gramian_analysis}
\end{equation}
\rev{The following proposition shows that given the persistence of excitation condition~\cite{Boddupalli2019}, Gram matrix does not grow unboundedly with the sample length $K$. Consequently, the resulting condition number bound remains $K$-independent, which explains the empirical observation that RC--Koopman maintains high numerical stability even as the dataset size or reservoir dimension increases.}

\begin{proposition}[Conditioning of Reservoir Lifted Data]
\label{prop:rc_conditioning_analysis}

Assume $\|\mathbf v_k\|_2 \le M_v$ for all $k$. Under Theorem~\ref{thm:esp_lifting}, there exists a constant
$C_\psi>0$ such that
$
\|\boldsymbol\psi_k\|_2 \le C_\psi .
$
Hence,
\begin{equation}
    \lambda_{\max}(\mathbf G) \le C_\psi^2 K .
\end{equation}
Furthermore, if the lifted sequence satisfies the persistent excitation condition
\begin{equation}
\frac{1}{K}
\sum_{k=1}^K
\boldsymbol\psi_k \boldsymbol\psi_k^\top
\succeq
\alpha I,
\qquad
\alpha>0,
\label{eq:pe_analysis}
\end{equation}
then
\begin{equation}
    \kappa(\mathbf G) \le \frac{C_\psi^2}{\alpha},
\end{equation}
which is independent of the sample length $K$.
\end{proposition}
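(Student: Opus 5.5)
We bound the reservoir component first, then the Gramian spectrum, and finally the condition number.

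\emph{Step 1: uniform bound on the reservoir state.} By Theorem~\ref{thm:esp_lifting}, the reservoir trajectory is the unique bounded fixed point of $\mathcal T$. To make the bound explicit, we assume (as the bias-free setup suggests) that $\sigma(0)=0$, or more generally $\|\sigma(0)\|_2=:s_0<\infty$. Then, with $\gamma := L_\sigma\|\mathbf W_{\mathrm{res}}\|_2<1$,
\begin{equation*}
\|\mathbf r_k\|_2 \le s_0 + L_\sigma\big(\|\mathbf W_{\mathrm{res}}\|_2\|\mathbf r_{k-1}\|_2 + \|\mathbf W_{\mathrm{in}}\|_2 M_v\big)
\le s_0 + \gamma\|\mathbf r_{k-1}\|_2 + L_\sigma\|\mathbf W_{\mathrm{in}}\|_2 M_v .
\end{equation*}
Taking the supremum over $k$ on the bounded fixed-point trajectory gives $\|\mathbf r\|_{\ell_\infty}\le M_r := (s_0+L_\sigma\|\mathbf W_{\mathrm{in}}\|_2M_v)/(1-\gamma)$. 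If the trajectory is instead started from a finite initial condition $\mathbf r_0$, iterating the recursion gives the bound $\max\{\|\mathbf r_0\|_2, M_r\}$. For \texttt{tanh} one can skip this and use the trivial bound $\|\mathbf r_k\|_2\le\sqrt{n_r}$.

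\emph{Step 2: bound on the lifted state and on $\lambda_{\max}(\mathbf G)$.} Since $\boldsymbol\psi_k=[\mathbf v_k^\top\ \mathbf r_k^\top]^\top$, we get $\|\boldsymbol\psi_k\|_2^2=\|\mathbf v_k\|_2^2+\|\mathbf r_k\|_2^2\le M_v^2+M_r^2=:C_\psi^2$. Then $\lambda_{\max}(\mathbf G)=\max_{\|\mathbf z\|=1}\sum_k(\mathbf z^\top\boldsymbol\psi_k)^2\le\sum_k\|\boldsymbol\psi_k\|_2^2\le C_\psi^2K$ by Cauchy--Schwarz. Equivalently, $\lambda_{\max}\le\operatorname{tr}\mathbf G$.

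\emph{Step 3: condition number.} Condition~\eqref{eq:pe_analysis} states $\mathbf G\succeq\alpha K\,I$, so $\lambda_{\min}(\mathbf G)\ge\alpha K>0$ and $\mathbf G$ is invertible. Since $\mathbf G$ is symmetric positive definite, $\kappa(\mathbf G)=\lambda_{\max}/\lambda_{\min}\le C_\psi^2K/(\alpha K)=C_\psi^2/\alpha$, and the factor $K$ cancels.

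The only nontrivial point is Step 1: making $C_\psi$ explicit and independent of $k$ and $K$. This requires the affine growth bound on $\sigma$, namely $\|\sigma(\mathbf z)\|\le s_0+L_\sigma\|\mathbf z\|$, which follows from Lipschitz continuity, together with the contraction factor $\gamma<1$ to sum the geometric series. Steps 2 and 3 are routine linear algebra.
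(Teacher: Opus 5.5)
Your proof is correct and follows the paper's argument: a Lipschitz recursion with contraction factor $\gamma<1$ bounds $\|\mathbf r_k\|_2$ uniformly, then $\lambda_{\max}(\mathbf G)\le\sum_k\|\boldsymbol\psi_k\|_2^2\le C_\psi^2K$, and persistent excitation gives $\lambda_{\min}(\mathbf G)\ge\alpha K$, so $K$ cancels. Your Step~1 is slightly more careful than the paper's. The paper implicitly assumes $\sigma(0)=0$ and drops the transient term $a^k\|\mathbf r_0\|_2$ when defining $C_\psi$, whereas you allow $\sigma(0)\neq 0$ and your $\max\{\|\mathbf r_0\|_2,M_r\}$ bound accounts for the initial condition.
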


\begin{proof}
From reservoir dynamics~\eqref{eq:reservoir_dynamics} and Lipschitz continuity, 
$\|\mathbf r_k\| \le L_\sigma (\|\mathbf W_{\mathrm{res}}\|_2\|\mathbf r_{k-1}\| + \|\mathbf W_{\mathrm{in}}\|_2 M_v).$
Let
$
a=L_\sigma\|\mathbf W_{\mathrm{res}}\|_2, \,
b=L_\sigma\|\mathbf W_{\mathrm{in}}\|_2 M_v .
$
Since $a<1$ (Assumption~\ref{assump:esp}),
$
\|\mathbf r_k\|_2
\le
a^k \|\mathbf r_0\|_2
+
\frac{b}{1-a}.
$
Hence $\mathbf r_k$ is uniformly bounded.
Define
$
C_\psi^2 = M_v^2 + (b/(1-a))^2,
$
thus
$
\|\boldsymbol\psi_k\|_2^2 = \|\mathbf v_k\|_2^2 + \|\mathbf r_k\|_2^2 \le C_\psi^2.
$
Since $\mathbf G$ is symmetric and positive semidefinite,
$
\lambda_{\max}(\mathbf G)
=
\|\mathbf G\|_2
\le
\sum_{k=1}^K
\|\boldsymbol\psi_k\boldsymbol\psi_k^\top\|_2
\le
C_\psi^2 K.
$
The lower bound follows directly from
\eqref{eq:pe_analysis}.
\end{proof}

%This establishes that reservoir-induced liftings admit conditioning bounds derived from system structure, rather than from assumptions on invariant measures or delay lengths. Moreover, as $\rho(\mathbf{W}_{\mathrm{res}}) \to 0$, the reservoir dynamics become strongly contractive, and the states $\mathbf{r}_k$ are dominated by the instantaneous input $\mathbf{y}_k$. Consequently, the lifted coordinates exhibit increased collinearity, leading to a more regularized (smoother) function class with reduced approximation capacity. In contrast, as $\rho(\mathbf{W}_{\mathrm{res}}) \to 1^{-}$, temporal dependencies persist over longer horizons and reservoir units become less correlated. This reduces redundancy in the lifted features and can improve the conditioning of the empirical Gram matrix $\mathbf{G}$. Operating in this regime approaches the so-called ``edge of stability'' (or more precisely, ``edge of chaos'')~\cite{carroll2020reservoir}, where reservoirs are frequently observed to exhibit enhanced computational capacity~\cite{parlitz2024learning}.

%%%%%%%%%%%%%%%%%%%%%%%%%%%%%%%%%%%%%%%%%%%%%%%%%%%%
\subsection{Temporal Memory Structure}

Reservoir liftings encode temporal information through exponentially fading memory. 
% This exponentially decaying memory contrasts with delay-coordinate embeddings, which impose a hard truncation at a fixed window length. 
\rev{We argue that this allows the identified operator by RC--Koopman to approximate Koopman eigenfunctions that are functionals of the system's history, effectively enriching the spectral decomposition of the underlying nonlinear flow.}

\begin{proposition}[Exponential Memory Decay]
\label{prop:memory_analysis}

Let $\gamma = L_\sigma\|\mathbf W_{\mathrm{res}}\|_2 .$ Under condition $\gamma<1$ (Assumption~\ref{assump:esp}), the sensitivity of the reservoir state to past inputs satisfies
\begin{equation}
    \left\| \frac{\partial \mathbf r_k}{\partial \mathbf v_{k-\tau}} \right\|_2
    \le
    \gamma^\tau \|\mathbf W_{\mathrm{in}}\|_2 .
\end{equation}
Consequently the effective memory horizon for precision $\epsilon>0$ satisfies
\begin{equation}
    \tau_\epsilon
    \ge
    \frac{\log(\epsilon/\|\mathbf W_{\mathrm{in}}\|_2)}{\log\gamma}.
    \label{eq:effective_memory}
\end{equation}
\end{proposition}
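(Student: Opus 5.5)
The proof will use the chain rule along the reservoir recursion~\eqref{eq:reservoir_dynamics}, treating $\mathbf r_k$ as a function of the input history via the unique trajectory from Theorem~\ref{thm:esp_lifting}. The input $\mathbf v_{k-\tau}$ first enters at time $k-\tau$ and then propagates through $\tau$ applications of the state-update map. Write $\mathbf z_j=\mathbf W_{\mathrm{res}}\mathbf r_{j-1}+\mathbf W_{\mathrm{in}}\mathbf v_j$ and $\mathbf D_j=\sigma'(\mathbf z_j)$, the diagonal Jacobian of the componentwise activation. Then
\begin{equation*}
\frac{\partial \mathbf r_k}{\partial \mathbf v_{k-\tau}}
=
\mathbf D_k\mathbf W_{\mathrm{res}}\,\mathbf D_{k-1}\mathbf W_{\mathrm{res}}\cdots \mathbf D_{k-\tau+1}\mathbf W_{\mathrm{res}}\;\mathbf D_{k-\tau}\mathbf W_{\mathrm{in}} .
\end{equation*}
This uses the fact that $\mathbf v_{k-\tau}$ affects $\mathbf r_j$ for $j>k-\tau$ only through $\mathbf r_{k-\tau}$.

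The key estimate is $\|\mathbf D_j\|_2\le L_\sigma$. It holds because $\sigma$ is Lipschitz with constant $L_\sigma$, so wherever $\sigma$ is differentiable we have $|\sigma'|\le L_\sigma$. Submultiplicativity of the spectral norm then gives
\begin{equation*}
\left\|\frac{\partial \mathbf r_k}{\partial \mathbf v_{k-\tau}}\right\|_2\le (L_\sigma\|\mathbf W_{\mathrm{res}}\|_2)^\tau L_\sigma\|\mathbf W_{\mathrm{in}}\|_2 = \gamma^\tau L_\sigma \|\mathbf W_{\mathrm{in}}\|_2 .
\end{equation*}

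This is the main point to reconcile with the statement. The bound as written omits the factor $L_\sigma$ on the input Jacobian, so it holds exactly when $L_\sigma\le 1$, which covers $\tanh$ and the usual sigmoid-type activations. I will state this normalization explicitly. Alternatively, I can absorb the extra factor into $\mathbf W_{\mathrm{in}}$, or note that the general bound differs only by the constant $L_\sigma$, which does not change the geometric rate $\gamma^\tau$. If $\sigma$ is merely Lipschitz and not differentiable, I will interpret the derivative as a Clarke generalized Jacobian, or replace it by the incremental quantity $\|\mathbf r_k-\tilde{\mathbf r}_k\|_2/\|\mathbf v_{k-\tau}-\tilde{\mathbf v}_{k-\tau}\|_2$. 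The same telescoping Lipschitz argument as in the proofs of Theorem~\ref{thm:esp_lifting} and Proposition~\ref{prop:rc_conditioning_analysis} yields the identical bound.

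For the memory horizon, I define $\tau_\epsilon$ as the smallest lag at which the sensitivity bound drops below $\epsilon$. Imposing $\gamma^\tau\|\mathbf W_{\mathrm{in}}\|_2\le\epsilon$ and taking logarithms gives $\tau\log\gamma\le\log(\epsilon/\|\mathbf W_{\mathrm{in}}\|_2)$. Dividing by $\log\gamma<0$, which holds since $0<\gamma<1$, flips the inequality and gives $\tau\ge \log(\epsilon/\|\mathbf W_{\mathrm{in}}\|_2)/\log\gamma$, which is~\eqref{eq:effective_memory}. The step that needs the most care is keeping the direction of the inequality straight when dividing by the negative logarithm. I will also remark on the trivial case $\epsilon\ge\|\mathbf W_{\mathrm{in}}\|_2$: there the right-hand side is nonpositive, so every lag, including $\tau=0$, already meets the threshold.
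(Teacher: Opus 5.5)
Your proposal is correct and follows the paper's proof essentially step for step: the same chain-rule product $\mathbf D_k\mathbf W_{\mathrm{res}}\cdots\mathbf D_{k-\tau+1}\mathbf W_{\mathrm{res}}\,\mathbf D_{k-\tau}\mathbf W_{\mathrm{in}}$, the bound $\|\mathbf D_j\|_2\le L_\sigma$ with submultiplicativity, and then setting $\gamma^\tau\|\mathbf W_{\mathrm{in}}\|_2=\epsilon$ and solving for $\tau$. The paper also handles the extra factor $L_\sigma$ on the input Jacobian the way you do, by taking $L_\sigma=1$ for $\tanh$, so your explicit $L_\sigma\le 1$ normalization and your care with the sign of $\log\gamma$ are slightly more careful versions of the same argument.
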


\begin{proof} Considering the reservoir dynamics~\eqref{eq:reservoir_dynamics} and applying the chain rule, the sensitivity of the state at step $k$ to an input at step $k-\tau$ is given by the product of Jacobians along the trajectory:
$\frac{\partial \mathbf{r}_k}{\partial \mathbf{v}_{k-\tau}} =
\left( \prod_{j=1}^{\tau} \frac{\partial \mathbf{r}_{k-j+1}}{\partial \mathbf{r}_{k-j}} \right) \frac{\partial \mathbf{r}_{k-\tau}}{\partial \mathbf{v}_{k-\tau}}.$
The Jacobian of the reservoir update with respect to the state is $\mathbf{D}_j \mathbf{W}_{\mathrm{res}}$, where $\mathbf{D}_j = \mathrm{diag}(\sigma'(\cdot))$ is the diagonal matrix of activation derivatives. Since $\sigma$ is $L_\sigma$-Lipschitz (Assumption~\ref{assump:esp}), $\|\mathbf{D}_j\|_2 \le L_\sigma$. Similarly, $\frac{\partial \mathbf{r}_{k-\tau}}{\partial \mathbf{v}_{k-\tau}} = \mathbf{D}_{k-\tau} \mathbf{W}_{\mathrm{in}}$. Taking spectral norms yields
\begin{align*}
\tiny
\Bigg|\Bigg| \frac{\partial \mathbf{r}_k}{\partial \mathbf{v}_{k-\tau}} \Bigg|\Bigg|_2
&\le \left( \prod_{j=1}^{\tau} ||\mathbf{D}_{k-j+1}||_2 ||\mathbf{W}_{\mathrm{res}}||_2 \right) ||\mathbf{D}_{k-\tau}||_2 ||\mathbf{W}_{\mathrm{in}}||_2 \\
&\le (L_\sigma ||\mathbf{W}_{\mathrm{res}}||_2)^\tau L_\sigma ||\mathbf{W}_{\mathrm{in}}||_2.
\end{align*}
Since $L_\sigma = 1$ for \texttt{tanh} activation, we obtain the simplified bound $\gamma^\tau \|\mathbf{W}_{\mathrm{in}}\|_2$. Setting the upper bound equal to $\epsilon$ and solving for $\tau$ yields \eqref{eq:effective_memory}.
\end{proof}

\begin{remark}
\label{prop:hankel_memory}
This result implies that while the reservoir can theoretically hold a memory indefinitely\footnote{As $\gamma \to 1$, the effective horizon $\tau_{\epsilon} \to \infty$.}, its practical memory is finite and tunable via the spectral properties of $\mathbf{W}_{\mathrm{res}}$. By contrast, Hankel liftings impose a uniform dependence and hard truncation on a finite window.
A Hankel lifting of length $d$ for temporal embedding produces observables that depend uniformly on the finite window $\{\mathbf{y}_{k-d+1},\dots,\mathbf{y}_k\}$. The memory is strictly zero for lags $\tau \ge d$. 
% To achieve an effective memory depth equivalent to a reservoir with contraction $\gamma \approx 0.99$ (at precision $\epsilon=10^{-3}$), a Hankel lifting would require a window length of $d \approx \tau_{\epsilon} \approx 687$\footnote{Assuming normalized weights $\|\mathbf{W}_{\mathrm{in}}\|_2 \approx 1$, we solve $0.99^\tau = 10^{-3}$. Taking common logs, $\tau \log(0.99) = \log(10^{-3})$ yields $\tau \approx 687$.}. This would result in a lifting dimension $n_y \cdot d$ that is computationally intractable for SVD-based \ac{HAVOK} methods. The RC--Koopman framework compresses this long-term history into a fixed dimension $n_r$, effectively decoupling the memory depth from the state-space dimension.
\end{remark}

%%%%%%%%%%%%%%%%%%%%%%%%%%%%%%%%%%%%%%%%%%%%%%%%%%%%
\subsection{Reservoir Memory and Koopman Spectrum}
\label{subsec:koopman_memory}

% The previous analysis established that the reservoir state possesses a finite fading memory characterized by the contraction factor $\gamma$. 
% In particular, Proposition~\ref{prop:memory_analysis} shows that the sensitivity of the reservoir state to past inputs decays exponentially, so that temporal information older than the effective horizon $\tau_\epsilon$ contributes at most $\epsilon$ to the lifted state. This implies that the reservoir lifting~\eqref{eq:lifted_state} acts as a nonlinear temporal feature map that aggregates information from a finite window of past observations. 
% We next show that this memory horizon determines the set of Koopman eigenmodes that can be resolved by the lifted observable space.

\begin{theorem}[Koopman Spectral Observability]
\label{thm:koopman_resolution}

Consider the discrete-time system~\eqref{eq:nonlinear_system} and assume the reservoir satisfies the ESP condition under Assumption~\ref{assump:esp}. Let $\tau_\epsilon$ denote the effective memory horizon defined in
Proposition~\ref{prop:memory_analysis}.
Then the lifted observable space generated by~\eqref{eq:lifted_state} can \rev{approximate} Koopman eigenfunctions $\phi_j$ with eigenvalues
$\lambda_j$ satisfying
\begin{equation}
|\lambda_j|^{\tau_\epsilon} \gtrsim \epsilon .
\label{eq:koopman_resolution_condition}
\end{equation}
\rev{Here, $a \gtrsim b$ denotes $a \ge C b$ for some constant $C>0$ independent of $j$ and $\tau_\epsilon$.} \rev{Koopman eigenfunctions (or equivalently, spectral components)} satisfying
\begin{equation}
|\lambda_j|
\gtrsim
\exp\!\left(\frac{\log \epsilon}{\tau_\epsilon}\right)
\end{equation}
\rev{remain persistent within the reservoir memory horizon and can be retained in the lifted observables}, whereas modes with faster decay vanish within the effective
memory horizon. Consequently, increasing the spectral radius $\rho(\mathbf W_{\mathrm{res}})$ enlarges $\tau_\epsilon$ and increases the range of Koopman \rev{eigenfunction components} whose temporal correlations remain observable within the lifted feature space.
\end{theorem}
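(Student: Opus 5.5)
The plan is to combine the fading-memory structure from Theorem~\ref{thm:esp_lifting} and Proposition~\ref{prop:memory_analysis} with the eigenfunction identity $\phi_j(\mathbf x_{k}) = \lambda_j^{\tau}\phi_j(\mathbf x_{k-\tau})$, which holds in the autonomous (or fixed-input) case. First I will fix the setting. Take $\mathbf u_k$ frozen, so the Koopman operator $\mathcal K$ of~\eqref{eq:nonlinear_system} is well defined. Assume each $\phi_j$ has a representation as a functional of the output history $\{\mathbf y_{k-s}\}_{s\ge 0}$; this is the observability or embedding premise invoked through generalized synchronization~\cite{hart2025generic}. The claim to establish is then about the contribution of $\phi_j$ to the temporal correlations carried by $\boldsymbol\psi_k$.

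Step one linearizes the reservoir map. By Proposition~\ref{prop:memory_analysis}, write $\mathbf r_k$ as a convergent sum of increments $\sum_{\tau\ge 0}\Delta_\tau$. Here $\Delta_\tau$ captures the dependence on $\mathbf v_{k-\tau}$ and satisfies $\|\Delta_\tau\|\lesssim \gamma^\tau\|\mathbf W_{\mathrm{in}}\|_2 M_v$. I will obtain this by telescoping the unique trajectory against trajectories with truncated input histories, using the contraction from Theorem~\ref{thm:esp_lifting}. Truncating at $\tau_\epsilon$ then gives $\|\mathbf r_k - \mathbf r_k^{(\tau_\epsilon)}\|\lesssim \epsilon$, where $\mathbf r_k^{(\tau_\epsilon)}$ depends only on the window $\{\mathbf v_{k-\tau}\}_{\tau\le\tau_\epsilon}$. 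Up to $O(\epsilon)$, the lifted span is therefore a space of functionals of a finite window of length $\tau_\epsilon$.

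Step two expands within that window. Suppose the output observable admits an expansion $\mathbf y_{k-\tau}=\sum_j \mathbf c_j \phi_j(\mathbf x_{k-\tau})$ (modal decomposition) with $\phi_j(\mathbf x_k) = \lambda_j^{\tau}\phi_j(\mathbf x_{k-\tau})$. The information about mode $j$ entering the reservoir at lag $\tau$ relative to its present value then scales as $|\lambda_j|^{\tau}$. The idea is that a mode can be reconstructed from the lifted features, for example via a linear readout of $\boldsymbol\psi_k$ fit by~\eqref{eq:koopman_regression}, only if its signature is not buried below the truncation level $\epsilon$ across the window. This requires $|\lambda_j|^{\tau_\epsilon}\ge C\epsilon$, with $C$ absorbing $\|\mathbf c_j\|$, $\|\mathbf W_{\mathrm{in}}\|_2$ and $M_v$. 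Modes with $|\lambda_j|^{\tau_\epsilon}\ll\epsilon$ have decayed below the memory resolution before the window closes, so their correlation structure is indistinguishable from the $O(\epsilon)$ truncation error. Taking logarithms yields the equivalent form $|\lambda_j|\gtrsim\exp(\log\epsilon/\tau_\epsilon)$.

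Finally, monotonicity follows from~\eqref{eq:effective_memory}. Increasing $\rho(\mathbf W_{\mathrm{res}})$ under the usual uniform scaling also increases $\|\mathbf W_{\mathrm{res}}\|_2$, hence $\gamma$ and $\tau_\epsilon$. This lowers the threshold $\epsilon^{1/\tau_\epsilon}$ and enlarges the admissible set of modes.

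The main obstacle is the approximation direction of step two. Showing that modes above the threshold are actually \emph{representable} in $\mathrm{span}(\boldsymbol\psi_k)$ requires a universality or embedding result for the reservoir, not merely the sensitivity bounds. I would try first to invoke the generic embedding of~\cite{hart2025generic} together with the density of linear readouts of ESN features in fading-memory functionals on the $\tau_\epsilon$-window. Granting that, the remaining work is bookkeeping of the constants that define $\gtrsim$. Because the statement is itself phrased with $\gtrsim$ and ``can approximate,'' I expect the proof to remain a heuristic order-of-magnitude argument rather than a sharp equivalence.
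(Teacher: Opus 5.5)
Your proposal is essentially the paper's own argument. The paper's proof consists only of your step two: $\phi_j$ evolves as $\lambda_j^k$, components with $|\lambda_j|^{\tau_\epsilon}\ll\epsilon$ fade below the memory resolution, those with $|\lambda_j|^{\tau_\epsilon}\gtrsim\epsilon$ persist and are simply asserted to be capturable, and taking logarithms with $\tau_\epsilon$ from Proposition~\ref{prop:memory_analysis} gives the stated threshold. The paper supplies neither your finite-window truncation, nor the uniform-scaling argument that raising $\rho(\mathbf W_{\mathrm{res}})$ raises $\gamma$ and hence $\tau_\epsilon$, nor the representability (universality/embedding) step you flag, so that obstacle is shared rather than specific to your route. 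If you carry out the constant bookkeeping, note that letting $C$ absorb $\|\mathbf c_j\|$, or the factor $1/(1-\gamma)$ from summing the truncation tail, conflicts with the statement's requirement that $C$ be independent of $j$ and $\tau_\epsilon$.
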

\begin{proof} Koopman eigenfunctions evolve according to
$
\phi_j(\mathbf y_k) = \lambda_j^k \phi_j(\mathbf y_0).
$
\rev{Recovering such eigenfunction components from time-series observables} requires temporal correlations over a time span comparable to the decay rate of the corresponding \rev{spectral component}. If
$
|\lambda_j|^{\tau_\epsilon} \ll \epsilon ,
$
the temporal correlations associated with this \rev{eigenfunction component} vanish within
the reservoir memory horizon and therefore cannot be recovered from
the lifted observables. Conversely, \rev{eigenfunction components} satisfying
$
|\lambda_j|^{\tau_\epsilon} \gtrsim \epsilon
$
remain persistent within the observable time window and can be captured by the reservoir lifting. Substituting the expression for $\tau_\epsilon$ from Proposition~\ref{prop:memory_analysis} yields the stated spectral resolution condition.
\end{proof}

\begin{remark}
Theorem~\ref{thm:koopman_resolution} characterizes Koopman eigenfunction observability within finite reservoir memory but does not guarantee an invariant subspace. It establishes a spectral trade-off governed by $\rho(\mathbf{W}_{\mathrm{res}})$: a smaller $\rho(\mathbf{W}_{\mathrm{res}})$ enhances contraction and numerical stability but restricts the temporal depth $\tau_\epsilon$, potentially filtering slow-decaying modes. Conversely, increasing $\rho(\mathbf{W}_{\mathrm{res}})$ expands the observable spectral range at the expense of reduced contraction margins.

% The theorem characterizes which Koopman eigenfunctions remain observable given the reservoir's finite memory but does not guarantee an invariant Koopman subspace, a central and longstanding challenge in Koopman theory. \rev{Moreover, it provides a spectral interpretation of the trade-off between the reservoir's contraction strength and its dynamical expressivity. This balance is fundamentally governed by $\rho(\mathbf{W}_{\mathrm{res}})$. While a smaller spectral radius ensures faster state convergence (stronger contraction) and numerical stability, it restricts the temporal depth $\tau_\epsilon$, potentially omitting slow-decaying Koopman modes critical for accurate reconstruction. Conversely, increasing $\rho(\mathbf{W}_{\mathrm{res}})$ toward unity expands the observable spectral range, improving the linear approximation of the Koopman operator at the cost of reduced contraction margins.}
\end{remark}

% Theorem~\ref{thm:koopman_resolution} provides a practical design interpretation for reservoir lifting. Specifically, the spectral radius of the reservoir determines the effective temporal resolution of the observable space and therefore \rev{influences which Koopman eigenfunction components can be captured.} 
Algorithm~\ref{alg:spectral_radius_correlation} translates the theoretical connection between reservoir memory and Koopman spectral resolution into a practical design rule. Instead of selecting the spectral radius heuristically, the reservoir memory may be tuned to match the dominant dynamical time scales of the system. Specifically, since Koopman eigenvalues govern the decay of temporal correlations in the observed dynamics, the dominant system time scale could be estimated directly from the autocorrelation function of the measurements. In particular, the correlation time $\tau_c$ is defined as the $e$-folding time of the normalized autocorrelation function, i.e., the smallest lag satisfying $C(\tau_c) \le e^{-1}$. Matching this correlation decay with the reservoir memory decay provides a practical guideline for spectral radius selection. 

\begin{algorithm}
\caption{Spectral Radius Selection}
\label{alg:spectral_radius_correlation}
\small
\KwIn{
    Observation trajectory $\{\mathbf y_k\}_{k=1}^{K}$;
    correlation threshold (default $e^{-1}$)
}
\KwOut{
    Selected spectral radius $\rho(\mathbf W_{\mathrm{res}})$;
    estimated correlation time $\tau_c$
}

\KwData{
    maximum lag $L$ for autocorrelation computation %(e.g., $L = K/4$)
}

Collect a trajectory $\{\mathbf y_k\}_{k=1}^{K}$ as input data\;

Estimate the normalized autocorrelation for lags $\tau = 1, \ldots, L$:
$
C(\tau) = (\sum_{k=1}^{K-\tau} \mathbf y_k^\top \mathbf y_{k+\tau})/(\sum_{k=1}^{K} \mathbf y_k^\top \mathbf y_k)
$\;

Determine the correlation time as the smallest lag where autocorrelation drops below $e^{-1}$:
$
\tau_c = \min \{\tau : C(\tau) \le e^{-1}\}
$\;

Choose the reservoir spectral radius using the exponential decay relation:
$
\rho(\mathbf W_{\mathrm{res}}) \approx \exp\!\left(-1/\tau_c\right)
$\;

Initialize reservoir weights with this spectral radius\;

Perform Koopman identification using the lifted observables from the reservoir states\;

\Return $\rho(\mathbf W_{\mathrm{res}})$, $\tau_c$
\end{algorithm}

%%%%%%%%%%%%%%%%%%%%%%%%%%%%%%%%%%%%%%%%%%%%%%%%%%%%%%%%%%%%%%%%%%%%%%%%%%%%%%%%
\section{Results and Discussion}
\label{sec:results}
The RC--Koopman framework is evaluated on two systems \rev{assuming full-state measurements ($\mathbf{y}_k = \mathbf{x}_k$)}: (i) an unforced \textit{Duffing oscillator} ($\dot{x}_1 = x_2, \, \dot{x}_2 = -c_1 x_2 - (c_2 x_1^2 + c_3)x_1$, where $c_1=0.5, c_2=1, c_3=-1$) and (ii) a controlled \textit{differential-drive robot} ($\dot{x} = v\cos\theta, \dot{y} = v\sin\theta, \dot{\theta} = \omega$, where $x, y$ denotes planar position, $\theta$ the heading angle, $v, \omega$ the linear and angular velocities) integrated via forward Euler ($\Delta t = 0.05$). We compare RC--Koopman against \ac{EDMD} \rev{with \ac{RBF} dictionary} and \ac{HAVOK}\rev{\footnote{Our \ac{HAVOK} implementation utilizes the delay-embedding procedure from \cite{brunton2017chaos} but performs identification via the same least-squares regression used for \ac{EDMD} and RC--Koopman, making it methodologically equivalent to \ac{HDMD}~\cite{bollt2021explaining}. This ensures that performance variances are strictly attributable to the dictionary architecture
% —static nonlinear (\ac{EDMD}), linear memory (\ac{HAVOK}), or nonlinear stateful (RC--Koopman)—
rather than the underlying solver.}} using a fixed lifting dimension of 12 and identical regularization. Beyond prediction accuracy, we analyze numerical conditioning and spectral properties to verify the theoretical results in Section~\ref{sec:theoretical_analysis}.

% %%%%%%%%%%%%%%%%%%%%%%%%%%%%%%%%%%%%%%%%%%%%%%%%%%%%
% \subsection{Evaluation Metrics}

% Prediction accuracy is quantified using the \ac{RMSE}:
% $
% \mathrm{RMSE}(k)
% =
% \sqrt{\frac{1}{n_y}\|\hat{\mathbf y}_k - \mathbf y_k\|_2^2},
% $
% evaluated for one-step prediction over the test trajectories. Numerical conditioning is assessed through the condition number of the lifted data Gramian 
% $
% \kappa(\boldsymbol{\Psi}\boldsymbol{\Psi}^\top),
% $
% which reflects the stability of the least-squares estimation used to identify the Koopman operator. We further analyze the spectral properties of the learned operator via the spectral radius $\rho(\mathbf A)$. In particular, stable and well-conditioned identification should yield operators whose spectral characteristics remain consistent across training trials.

%%%%%%%%%%%%%%%%%%%%%%%%%%%%%%%%%%%%%%%%%%%%%%%%%%%%
\subsection{Comparative Performance}

\subsubsection{Dynamics Reconstruction}

Reconstruction accuracy is evaluated using the one-step-ahead \ac{NRMSE} over test trajectories:
$
\mathrm{NRMSE}(k)
=
\sqrt{\frac{1}{Nn_y}\sum_{k=1}^N\|\hat{\mathbf y}_k - \mathbf y_k\|_2^2},
$
where $\hat{\mathbf y}_k$ denotes the predicted state. \rev{Figure~\ref{fig:reconstructed_trajectories} shows the system trajectories reconstructed via the identified RC--Koopman dynamics} and Table~\ref{tab:nrmse_comparison} summarizes the results for each method on the two benchmark systems. For Duffing oscillator, \ac{HAVOK} achieves the highest accuracy, outperforming RC--Koopman and \ac{EDMD}. However, this comes at the cost of stability as shown in Fig.~\ref{fig:Eigenvalues}. \ac{HAVOK} yields 7 unstable eigenvalues, whereas RC--Koopman exhibits only 1 and \ac{EDMD} none. For differential-drive robot, all three methods yield comparable performance, yet \ac{EDMD} produces 4 unstable eigenvalues. These results demonstrate that RC--Koopman achieves a favorable balance between reconstruction accuracy and dynamical stability compared to \ac{EDMD} and \ac{HAVOK}. \rev{Note that since all compared methods utilize a least-squares formulation for operator identification, each is numerically optimal for its respective observable set. The satisfactory performance of RC--Koopman, therefore, stems from the reservoir's ability to generate an informative and numerically stable dictionary.}
% By encoding the history of the system into the lifted state $\boldsymbol{\psi}_k$, the resulting linear operator captures nonlinearities that memoryless DMD/EDMD approaches fail to resolve in the original state space.} \rev{While some methods may achieve high accuracy on one-step benchmarks (e.g., \ac{HAVOK} on the Duffing oscillator), the RC--Koopman framework prioritizes the spectral and numerical integrity of the identified operator. The stability and conditioning results shown in Table II suggest that RC--Koopman identifies a more robust representation of the underlying attractor, which is a necessary prerequisite for any subsequent control or long-term analysis.
\begin{figure}
    \centering
    \includegraphics[width=\linewidth]{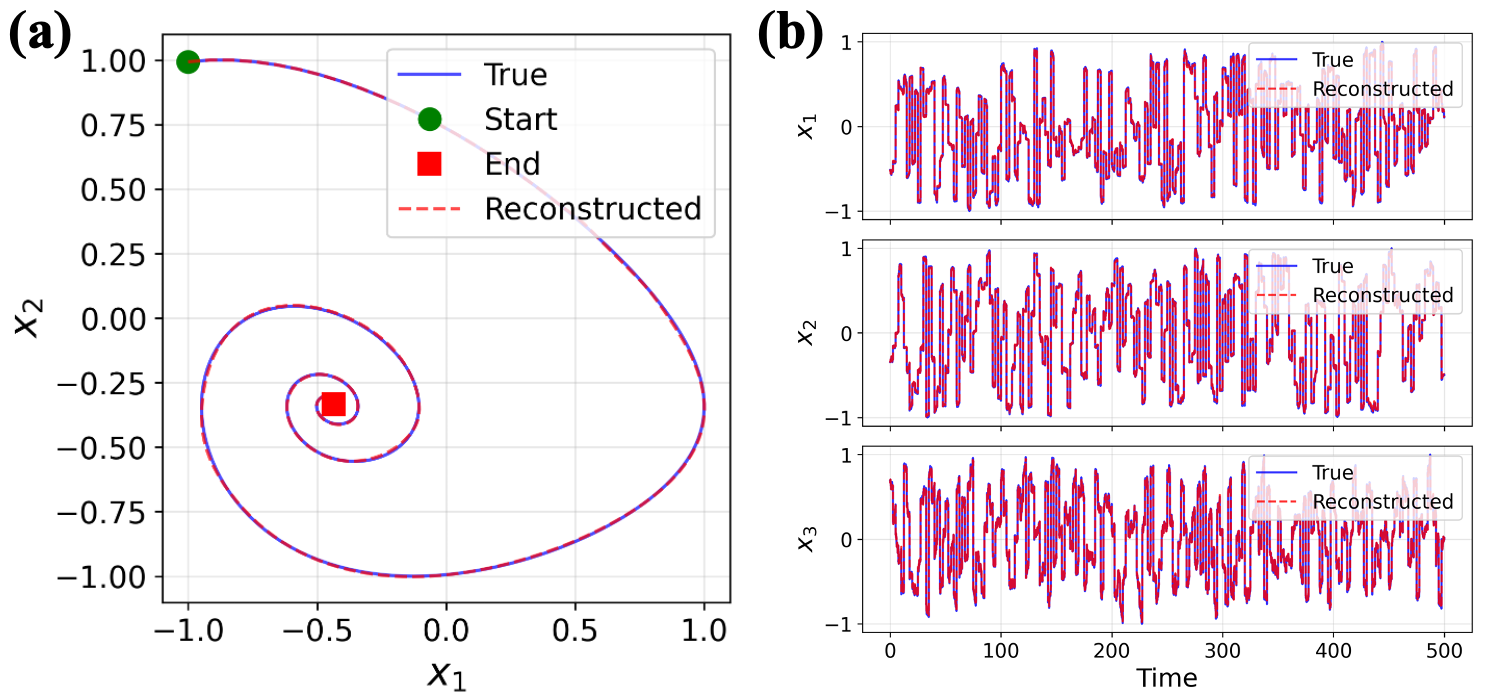}
    \caption{\rev{\textbf{Comparison of ground truth and reconstructed dynamics.} (a) The Duffing oscillator and (b) a differential-drive robot model.}}
    \label{fig:reconstructed_trajectories}
\end{figure}
\begin{table}
\centering
\caption{NRMSE comparison for reconstruction.}
\label{tab:nrmse_comparison}
\renewcommand{\arraystretch}{0.85} % Reduces row height
\setlength{\tabcolsep}{4pt}       % Reduces column spacing
\small                             % Slightly smaller font for compactness
\begin{tabular}{lcc}
\toprule
\multirow{2}{*}{Method} & \multicolumn{2}{c}{NRMSE} \\
\cmidrule(lr){2-3}
 & Duffing Oscillator & Differential-Drive Robot \\
\midrule
RC--Koopman & $5.38 \times 10^{-3}$ & $2.02 \times 10^{-1}$ \\
EDMD & $3.17 \times 10^{-2}$ & $2.02 \times 10^{-1}$ \\
HAVOK & $7.39 \times 10^{-4}$ & $2.03 \times 10^{-1}$ \\
\bottomrule
\end{tabular}
\end{table}

\begin{figure}
    \centering
    \includegraphics[width=\linewidth]{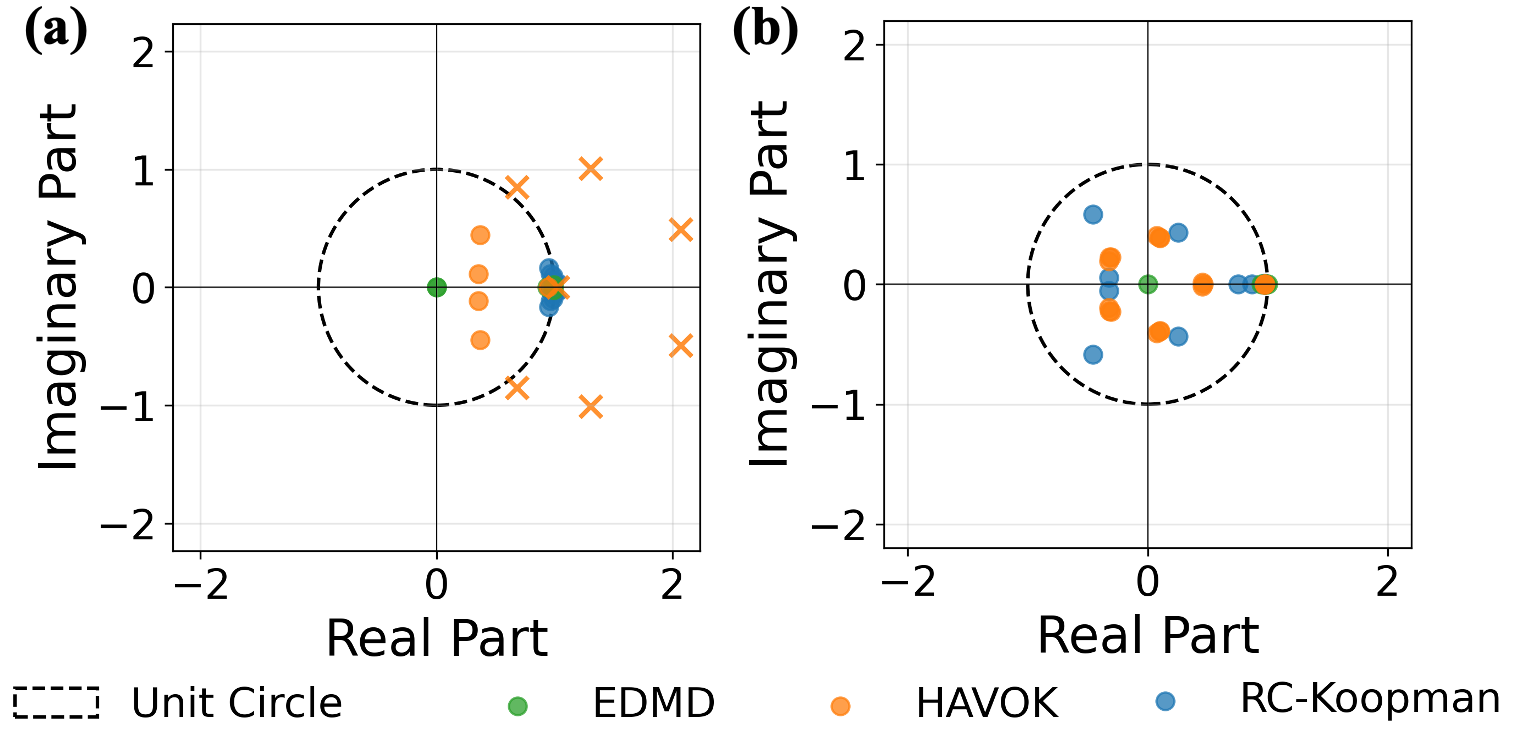}
    \caption{\textbf{Eigenvalue spectra of the learned Koopman operators.} (a) The Duffing oscillator and (b) a differential-drive robot. \rev{Dots ($\bullet$) indicate eigenvalues within or on the unit circle, while crosses ($\times$) denote eigenvalues outside the unit circle, representing numerically unstable modes.} For visual clarity, eigenvalues with $|\lambda| > 3$ are not shown.}
    \label{fig:Eigenvalues}
\end{figure}

\subsubsection{Numerical Conditioning}

The numerical conditioning of the lifted representations is summarized in Table~\ref{tab:conditioning_analysis}. For the Duffing oscillator, reservoir lifting satisfies the persistent excitation condition with $\alpha = 8.3\times 10^{-7}$, yielding a moderate condition number $\kappa(\mathbf{G})=1.6\times 10^{6}$ that adheres to the theoretical bound $C_\psi^2/\alpha$ in Proposition~\ref{prop:rc_conditioning_analysis}. Conversely, \ac{EDMD} and \ac{HAVOK} produce nearly singular Gram matrices ($\kappa \approx 10^{16}$--$10^{17}$) with minimum eigenvalues reaching numerical precision, effectively violating the persistent excitation condition. On the differential-drive robot, while \ac{EDMD} remains poorly conditioned, \ac{HAVOK} achieves $\kappa(\mathbf{G})=7.5\times 10^{2}$ and a positive excitation constant $\alpha = 2.7\times10^{-3}$. This difference reflects the sensitivity of Hankel embeddings to signal characteristics; delay coordinates may become rank-deficient for highly correlated or transient dynamics. Overall, the results demonstrate that reservoir lifting consistently produces well-conditioned feature representations. This empirical behavior supports Proposition~\ref{prop:rc_conditioning_analysis} and highlights the robustness of reservoir-based observables for Koopman operator identification.
\begin{table}[t]
\centering
\caption{Verification of Proposition~\ref{prop:rc_conditioning_analysis}.}
\label{tab:conditioning_analysis}
\renewcommand{\arraystretch}{0.85} % Reduces row height
\setlength{\tabcolsep}{4pt}       % Reduces column spacing
\small                             % Slightly smaller font for compactness
\begin{tabular}{lcccc}
\toprule
Method & $C_\psi$ & $\alpha$ & $\kappa(\mathbf G)$ & $C_\psi^2/\alpha$ \\
\midrule
\multicolumn{5}{c}{{Duffing Oscillator}} \\
RC--Koopman & $2.34$ & $8.3\times10^{-7}$ & $1.6\times10^{6}$ & $6.6\times10^{6}$ \\
EDMD & $2.23$ & $<10^{-16}$ & $1.2\times10^{17}$ & --- \\
HAVOK & $3.11$ & $<10^{-16}$ & $2.6\times10^{16}$ & --- \\
\midrule
\multicolumn{5}{c}{{Differential-Drive Robot}} \\
RC--Koopman & $2.53$ & $2.3\times10^{-4}$ & $3.8\times10^{3}$ & $2.8\times10^{4}$ \\
EDMD & $2.68$ & $<10^{-16}$ & $5.7\times10^{16}$ & --- \\
HAVOK & $3.67$ & $2.7\times10^{-3}$ & $7.5\times10^{2}$ & $5.0\times10^{3}$ \\
\bottomrule
\end{tabular}
\end{table}

\subsubsection{Koopman Spectral Observability}

To validate the spectral resolution predicted by Theorem~\ref{thm:koopman_resolution}, we analyze the relationship between identified Koopman eigenvalue lifetimes, $T_i = -1/\log |\lambda_i|$, and the reservoir memory horizon $\tau_\epsilon$. Scatter plots of $T_i$ versus $\tau_\epsilon$ illustrate the distribution of spectral components across varying reservoir spectral radii $\rho$, with the boundary $T_i = \tau_\epsilon$ denoting the theoretical limit of identifiable dynamics. The spectral distribution varies systematically with $\rho$ across both benchmark systems. Low spectral radii (e.g., $\rho=0.1$) restrict the memory horizon, capturing only a few dominant slow components. Conversely, excessive radii (e.g., $\rho \ge 1.5$) yield eigenvalues with lifetimes exceeding the effective memory horizon, suggesting temporal over-extension and unreliable identification. When $\rho$ is chosen via Algorithm~\ref{alg:spectral_radius_correlation} ($\rho \approx 0.98$), all identified eigenvalues lie within the $T_i \le \tau_\epsilon$ region. This alignment confirms that the reservoir memory effectively matches the system’s dominant time scales, enabling the RC--Koopman framework to capture the Koopman spectrum without introducing spurious, long-lived artifacts. These results empirically support the theoretical interpretation that $\rho$ governs the temporal resolution of the lifted feature space.

% Across both benchmark systems, the distribution of modes varies systematically with the spectral radius. For small spectral radii (e.g., $\rho = 0.1$ in the Duffing oscillator), the reservoir memory is short and only a small number of slow modes appear, with the largest lifetimes dominating the spectrum. Conversely, excessively large spectral radii (e.g., $\rho = 1.5$ or $2.0$ in the differential-drive robot) produce modes with large apparent lifetimes that exceed the effective memory horizon, indicating over-extended temporal dynamics that cannot be reliably resolved. For intermediate spectral radii, most modes cluster near the origin, corresponding to rapidly decaying components whose lifetimes remain short relative to the system time scales. Importantly, when the spectral radius is chosen based on Algorithm~\ref{alg:spectral_radius_correlation}, yielding $\rho \approx 0.98$ for both systems, all identified modes lie below the $T_i=\tau_\epsilon$ boundary. This indicates that the reservoir memory is well matched to the dominant dynamical time scales of the system and is sufficient to capture the observable Koopman modes without introducing unrealistically long-lived components. These observations are consistent with Theorem~\ref{thm:koopman_resolution}, which predicts that the reservoir memory horizon determines which Koopman modes remain observable in the lifted representation. The empirical results therefore support the theoretical interpretation that tuning the spectral radius effectively controls the temporal resolution of the Koopman feature space.

\begin{figure}
    \centering
    \includegraphics[width=\linewidth]{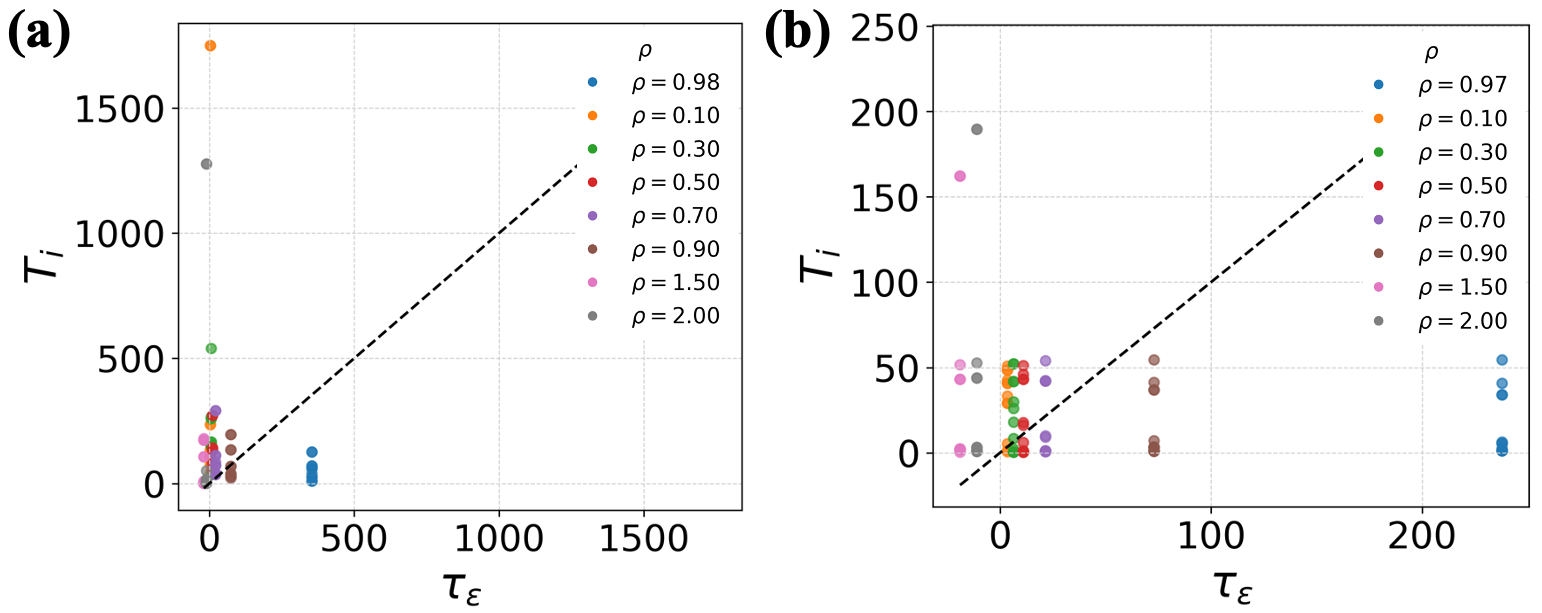}
    \caption{\textbf{Koopman eigenvalue lifetimes $T_i$ versus reservoir memory horizon $\tau_\epsilon$ across varying spectral radii $\rho$.} Each dot represents an identified eigenvalue, color-coded by $\rho$. The diagonal line $T_i=\tau_\epsilon$ indicates the boundary of identifiable dynamics: eigenvalues below the line evolve on time scales shorter than the reservoir memory and are therefore observable in the lifted representation. The spectral radius selected by the proposed correlation-based rule ($\rho\approx0.98$) places all spectral components within the observable region. (a) Duffing oscillator. (b) Differential-drive robot.}
    \label{fig:Observability}
\end{figure}

%%%%%%%
% \begin{itemize}
%     \item Choose a threshold $\eta \in (0,1)$ (e.g., $\eta = 0.9$).
%     \item For each spectral radius $\rho \in [0.1, 0.95]$:
%     \begin{itemize}
%         \item Construct the RC--Koopman model and compute the Koopman eigenvalues $\{\lambda_i(\rho)\}$.
%         \item Count the number of slow modes: $N_{\text{slow}}(\rho) = \#\{i : |\lambda_i(\rho)| \geq \eta\}$.
%     \end{itemize}
%     \item Plot $\rho$ versus $N_{\text{slow}}(\rho)$ to visualize how the spectral radius affects the number of resolvable slow modes.
% \end{itemize}

%%%%%%%
% For each eigenvalue $\lambda_i$, define the Koopman mode lifetime
% \[
% T_i = -\frac{1}{\log |\lambda_i|}.
% \]
% This represents the characteristic decay time of the mode.

% The reservoir memory horizon follows from Proposition~3:
% \[
% \tau_{\epsilon} = \frac{\log(\epsilon / \|W_{\text{in}}\|_2)}{\log \gamma},
% \]
% where $\gamma = L_{\sigma} \|W_{\text{res}}\|_2$.

% Plot $T_i$ versus $\tau_{\epsilon}$, or alternatively plot
% \[
% \max_i T_i \quad \text{vs} \quad \tau_{\epsilon}.
% \]

% Theorem prediction:
% \[
% T_i \lesssim \tau_{\epsilon}
% \]
% for observable modes.

%%%%%%%%%%%%%%%%%%%%%%%%%%%%%%%%%%%%%%%%%%%%%%%%%%%%

%%%%%%%%%%%%%%%%%%%%%%%%%%%%%%%%%%%%%%%%%%%%%%%%%%%%%%%%%%%%%%%%%%%%%%%%%%%%%%%%

\section{Conclusion}
\label{sec:conclusion}
This paper introduces RC--Koopman, a framework that utilizes reservoir as a stateful dictionary for linearizing nonlinear dynamical systems. By embedding fading-memory dynamics into the lifting process, the approach establishes a principled link between reservoir contraction and Koopman spectral observability. Theoretical and empirical results demonstrate that RC--Koopman achieves a superior balance between prediction accuracy and numerical conditioning compared to \ac{EDMD} and \ac{HAVOK}. \rev{Future work will investigate reservoir quality measures and extend the framework to long-horizon state estimation and trajectory reconstruction for robots with complex, nonlinear dynamics.}

\bibliography{reference}
\bibliographystyle{ieeetr}

\end{document}